\newtheorem{definition}{Definition}
\newtheorem{theorem}{Theorem}
\newcommand{\N}{\mathbb{N}}  % set of natural numbers
\newcommand{\x}{\mathbf{x}}  % vector x
\newcommand{\F}{0}  % false
\newcommand{\T}{1}  % true
\newcommand{\Regret}{\text{Regret}}
\newcommand{\poly}{\mathop{\text{poly}}}
\newcommand{\indicator}{\mathbf{1}}
\newcommand{\A}{{\mathcal A}}
\newcommand{\D}{{\mathcal D}}
\def\algdisj{\textsf{Alg}_\textsf{disj}}
\def\algosco{\textsf{Alg}_\textsf{osco}}
\newcommand{\ocombo}{online combinatorial optimization}
\newcommand{\oscombo}{online sleeping combinatorial optimization\xspace}
\newcommand{\Us}{U_s}
\newcommand{\prob}[1]{\textsc{#1}}
\title{Hardness of Online Sleeping Combinatorial Optimization Problems}
\author{Satyen Kale\thanks{Current affiliation: Google Research.}\ \ $^\dagger$ \\
 	Yahoo Research \\
 	\texttt{satyen@satyenkale.com}
\and
  Chansoo Lee\thanks{This work was done while the authors were at Yahoo Research.} \\
  Univ. of Michigan, Ann Arbor \\
  \texttt{chansool@umich.edu}
\and
  D\'avid P\'al \\
  Yahoo Research \\
  \texttt{dpal@yahoo-inc.com}
}
\begin{document}

\date{}
\maketitle

\begin{abstract}
We show that several online combinatorial optimization problems that admit
efficient no-regret algorithms become computationally hard in the sleeping
setting where a subset of actions becomes unavailable in each round.
Specifically, we show that the sleeping versions of these problems are at least
as hard as PAC learning DNF expressions, a long standing open problem. We show
hardness for the sleeping versions of \prob{Online Shortest Paths}, \prob{Online Minimum Spanning Tree}, \prob{Online $k$-Subsets}, \prob{Online $k$-Truncated Permutations}, \prob{Online Minimum Cut}, and \prob{Online Bipartite Matching}. The hardness result for the sleeping version of the Online Shortest Paths problem resolves an open problem presented at COLT~2015~\citep{Koolen-Warmuth-Adamskiy-2015}.
% We also give an efficient
% reduction of the task of minimizing per-action regret to the task of minimizing
% ranking regret, a different performance measure. Thus, existing efficient
% algorithms for minimizing ranking regret under various restrictions of the
% adversary can be used to efficiently minimize per-action regret as well.
\end{abstract}

% \begin{keywords}
% online algorithms, regret bounds, computational complexity
% \end{keywords}

%!TEX root=shortest-path.tex

\section{Introduction}

Online learning is a sequential decision-making problem where learner
repeatedly chooses an action in response to adversarially chosen losses for the
available actions. The goal of the learner is to minimize the \emph{regret},
defined as the difference between the total loss of the algorithm and the loss
of the best fixed action in hindsight. In online combinatorial optimization,
the actions are subsets of a ground set of \emph{elements} (also called
\emph{components}) with some combinatorial structure. The loss of an action is
the sum of the losses of its elements. A particular well-studied instance is
the \prob{Online Shortest Path} problem \citep{Takimoto-Warmuth-2003} on a
graph, in which the actions are the paths between two fixed vertices and the
elements are the edges.

We study a \emph{sleeping} variant of online combinatorial optimization where
the adversary not only chooses losses but \emph{availability} of the
elements every round. The unavailable elements are called \emph{sleeping} or
\emph{sabotaged}. In \prob{Online Sabotaged Shortest Path} problem, for example, 
the adversary specifies unavailable edges every round, and
consequently the learner cannot choose any path using those edges.
A straightforward application of the sleeping experts algorithm proposed by \citet{Freund-Schapire-Singer-Warmuth-1997} gives a no-regret learner, but it takes exponential time (in the input graph size) every round. 
The design of a computationally efficient no-regret algorithm for \prob{Online Sabotaged Shortest Path} problem was presented as an open problem at COLT 2015 by \citet{Koolen-Warmuth-Adamskiy-2015}.

In this paper, we resolve this open problem and prove that \prob{Online Sabotaged
Shortest Path} problem is computationally hard. Specifically, we show that a
polynomial-time low-regret algorithm for this problem implies a polynomial-time
algorithm for PAC learning DNF expressions, which is a long-standing open
problem. The best known algorithm for PAC learning DNF expressions on $n$
variables has time complexity $2^{\widetilde
O(n^{1/3})}$~\citep{Klivans-Servedio-2001}.

Our reduction framework (Section~\ref{section:base-hardness}) in fact shows a general result that any online sleeping combinatorial optimization problem with two simple structural
properties is as hard as PAC learning DNF expressions.
Leveraging this result, we obtain hardness results for the sleeping variant of well-studied online combinatorial optimization problems for which a polynomial-time no-regret algorithm exists: \prob{Online Minimum Spanning Tree}, \prob{Online
$k$-Subsets}, \prob{Online $k$-Truncated Permutations}, \prob{Online Minimum Cut}, and \prob{Online
Bipartite Matching} (Section~\ref{section:hardness-results}).

Our hardness result applies to the worst-case adversary as well as a \emph{stochastic} adversary, who draws an i.i.d. sample every round from a fixed (but unknown to the learner) joint distribution over availabilities and losses.
This implies that no-regret algorithms would require even stronger restrictions on the adversary. 
% Indeed, we also give a reduction showing that existing efficient algorithms
% for minimizing a different notion of regret (viz. policy regret) under
% various restrictions of the adversary can be used to minimize per-action
% regret as well.

\subsection{Related Work}

\paragraph{Online Combinatorial Optimization.} The standard problem of online linear optimization with $d$ actions
(Experts setting) admits algorithms with $O(d)$ running time
per round and $O(\sqrt{T \log d})$ regret after $T$
rounds~\citep{Littlestone-Warmuth-1994, Freund-Schapire-1997}, which is minimax optimal~\citep[Chapter~2]{Cesa-Bianchi-Lugosi-2006}. A naive application of such algorithms to online combinatorial optimization problem (precise definitions to be given momentarily) over a ground set of $d$ elements will result in $\exp(O(d))$ running time per round and $O(\sqrt{Td})$ regret.

Despite this, many online combinatorial optimization problems, such as the ones considered in this paper, admit algorithms with\footnote{In this paper, we use the $\poly(\cdot)$ notation to indicate a polynomially bounded function of the arguments.} $\poly(d)$ running time per round and $O(\poly(d)\sqrt{T})$ regret~\citep{Takimoto-Warmuth-2003, Kalai-Vempala-2005, Koolen-Warmuth-Kivinen-2010, Audibert-Bubeck-Lugosi-2013}. In fact, \citet{Kalai-Vempala-2005} shows that the existence of a polynomial-time algorithm for an offline combinatorial problem implies the existence of an algorithm for the corresponding online optimization problem with the same per-round running time and $O(\poly(d) \sqrt{T})$ regret.

\paragraph{Online Sleeping Optimization.} In studying online sleeping optimization, three different notions of regret
have been used: \begin{inparaenum}[(a)] \item policy regret, \item ranking
regret, and \item per-action regret, \end{inparaenum} in decreasing order of computational hardness to achieve no-regret.
\emph{Policy regret} is the total difference between the loss of the algorithm and the loss of the best policy, which maps a set of available actions and the observed loss sequence to an available action~\citep{Neu-Valko-2014}.
\emph{Ranking regret} is the total difference between
the loss of the algorithm and the loss of the best ranking of actions, which corresponds to a policy that chooses in each round the highest-ranked available
action~ \citep{Kleinberg-Niculescu-Mizil-Sharma-2010, Kanade-Steinke-2014,
Kanade-McMahan-Bryan-2009}. 
\emph{Per-action regret} is the difference between the
loss of the algorithm and the loss of an action, summed over only the rounds in
which the action is available~\citep{Freund-Schapire-Singer-Warmuth-1997,
Koolen-Warmuth-Adamskiy-2015}. 
Note that policy regret upper bounds ranking regret, and while ranking regret
and per-action regret are generally incomparable, per-action regret is
usually the smallest of the three notions.  
% If, on the other hand, the
% availabilities and losses are chosen i.i.d. but are {\em uncorrelated} with
% each other,

The sleeping Experts (also known as Specialists) setting has been extensively studied in the literature \citep{Freund-Schapire-Singer-Warmuth-1997,Kanade-Steinke-2014}. In this paper we focus on the more general online sleeping combinatorial optimization problem, and in particular, the per-action notion of regret.

A summary of known results for online sleeping optimization problems is given in Figure~\ref{fig:summary}. Note in particular that an efficient algorithm was known for minimizing per-action regret in the sleeping Experts problem \citep{Freund-Schapire-Singer-Warmuth-1997}. We show in this paper that a similar efficient algorithm for minimizing per-action regret in online sleeping combinatorial optimization problems cannot exist, unless there is an efficient algorithm for learning DNFs. Our reduction technique is closely related to that of \citet{Kanade-Steinke-2014}, who reduced agnostic learning of disjunctions to \emph{ranking regret} minimization in the sleeping Experts setting.

\begin{figure}[ht] 
\begin{tabular}{l||c|p{13em}|p{13em}} 
	{\bf Regret notion} & {\bf Bound} & {\bf Sleeping Experts} & {\bf Sleeping Combinatorial Opt.} \\
  \hline \hline
\multirow{2}{*}{Policy} & Upper & $O(\sqrt{T \log d})$, under ILA
\citep{Kanade-McMahan-Bryan-2009} & $O(\poly(d)\sqrt{T})$, under ILA  \citep{Neu-Valko-2014,Abbasi-Yadkori-Bartlett-Kanade-Seldin-Szepesvari-2013} \\
\cline{2-4}
& Lower & & $\Omega(\poly(d)T^{1-\delta})$, under SLA \citep{Abbasi-Yadkori-Bartlett-Kanade-Seldin-Szepesvari-2013} \\
\hline
\hline
\multirow{2}{*}{Ranking} & Lower & $\Omega(\poly(d)T^{1-\delta})$, under SLA & $\Omega(\exp(\Omega(d))\sqrt{T})$, under SLA\\
& & \citep{Kanade-Steinke-2014} & [Easy construction, omitted]\\
\hline
\hline
\multirow{2}{*}{Per-action} & Upper & $O(\sqrt{T \log d})$, adversarial setting \\
& & \citep{Freund-Schapire-Singer-Warmuth-1997} \\
\cline{2-4}
& Lower & & $\Omega(\poly(d)T^{1-\delta})$, under SLA \\
& & & [This paper]\\
\hline
\hline
\end{tabular} 
\caption{\label{fig:summary} \small Summary of known results. {\em Stochastic Losses and Availabilities} (SLA) assumption is where adversary chooses a joint distribution over loss and availability before the first round, and takes an i.i.d. sample every round. {\em Independent Losses and Availabilities} (ILA) assumption is where adversary chooses losses and availabilities independently of each other (one of the two may be adversarially chosen; the other one is then chosen i.i.d in each round). Policy regret upper bounds ranking regret which in turn upper bounds per-action regret for the problems of interest; hence some bounds shown in some cells of the table carry over to other cells by implication and are not shown for clarity. The lower bound on ranking regret in online sleeping combinatorial optimization is unconditional and holds for any algorithm, efficient or not. All other lower bounds are {\em computational}, i.e. for polynomial time algorithms, assuming intractability of certain well-studied learning problems, such as learning DNFs or learning noisy parities.}
\end{figure}

\section{Preliminaries}
\label{section:preliminaries}

An instance of online combinatorial optimization is defined by a \emph{ground
set} $U$ of $d$ elements, and a \emph{decision set} $\D$ of actions, each of
which is a subset of $U$.  In each round $t$, the online learner is required to
choose an action $V_t \in \D$, while simultaneously an adversary chooses a loss
function $\ell_t: U \rightarrow [-1, 1]$. The loss of any $V \in \D$ is given
by (with some abuse of notation)
\[\textstyle \ell_t(V)\ :=\ \sum_{e \in V} \ell_t(e).\]
The learner suffers loss $\ell_t(V_t)$ and obtains $\ell_t$ as
feedback. The regret of the learner with respect to an action $V \in \D$ is defined
to be
\[\textstyle \Regret_T(V)\ :=\ \sum_{t=1}^T \ell_t(V_t) - \ell_t(V).\]

We say that an online optimization algorithm has a regret bound of $f(d, T)$ if
$\Regret_T(V) \leq f(d, T)$ for all $V \in \D$. We say that the algorithm has \emph{no regret} if $f(d, T) = \poly(d) T^{1 - \delta}$ for some
$\delta \in (0, 1)$, and it is \emph{computationally efficient} if it has a
per-round running time of order $\poly(d, T)$.

We now define an instance of the \oscombo. In this setting, at the start of each round $t$, the adversary
selects a set of \emph{sleeping elements} $S_t \subseteq U$ and reveals it to
the learner.  Define $\A_t = \{V \in \D ~|~ V \cap S_t = \emptyset\}$, the set
of \emph{awake actions} at round $t$; the remaining actions in $\D$, called
\emph{sleeping actions}, are unavailable to the learner for that round. If
$\A_t$ is empty, i.e., there are no awake actions, then the learner is not
required to do anything for that round and the round is discarded from
computation of the regret.

For the rest of the paper, unless noted otherwise, we use \emph{per-action regret} as our performance
measure. Per-action regret with respect to $V \in \D$ is defined as:
\begin{equation}
\Regret_T(V)\ :=\ \sum_{t:\, V \in \A_t} \ell_t(V_t) - \ell_t(V).
\end{equation}
In other words, our notion of regret considers only the rounds in which $V$ is
awake.  

For clarity, we define an \ocombo \ \emph{problem} as a
family of \emph{instances} of \ocombo \ (and
correspondingly for \oscombo). For example,
\prob{Online Shortest Path} \emph{problem} is the family of all \emph{instances} of all graphs
with designated source and sink vertices, where the decision set $\D$ is a set
of paths from the source to sink, and the elements are edges of the graph. 
% The family is mainly parameterized by the ground set size $d$.
%  but other
% parameters may also be necessary, such as the value of $k$ in the $k$-subsets
% problem.

Our main result is that many natural online sleeping combinatorial optimization
problems are unlikely to admit a computationally efficient no-regret algorithm,
although their non-sleeping versions (i.e., $\A_t = \D$ for all $t$) do. More
precisely, we show that these online sleeping combinatorial optimization
problems are at least as hard as PAC learning DNF expressions, a long-standing
open problem.

\section{Online Agnostic Learning of Disjunctions}
\label{section:learning-disjunctions}

Instead of directly reducing PAC learning DNF expressions to no-regret learning for online sleeping combinatorial optimization problems, we use an intermediate problem, online agnostic learning of disjunctions. By a standard online-to-batch conversion argument \citep{Kanade-Steinke-2014}, online agnostic learning of disjunctions is at least as hard as agnostic improper PAC-learning of disjunctions~\citep{Kearns-Schapire-Sellie-1994}, which in turn is at least as hard as PAC-learning of DNF expressions \citep{Kalai-Kanade-Mansour-2012}.
% The online-to-batch conversion argument implies that we may assume that the input sequence $(\x_t, y_t)$ for the online problem is drawn i.i.d. from an unknown distribution; the problem remains as hard as PAC learning DNF expressions. This implies that in our reduction to online sleeping combinatorial optimization the adversary can be assumed to be drawing availabilities and losses i.i.d. from an unknown distribution as well.
The online-to-batch conversion argument allows us to assume the stochastic adversary (i.i.d. input sequence) for online agnostic learning of disjunctions, which in turn implies that our reduction applies to online sleeping combinatorial optimization with a stochastic adversary.

Online agnostic learning of disjunctions is a repeated game between the adversary and a
learning algorithm. Let $n$ denote the number of variables in the disjunction. In each round $t$, the adversary chooses a vector $\x_t \in
\{0,1\}^n$, the algorithm predicts a label $\widehat y_t \in \{0,1\}$ and then
the adversary reveals the correct label $y_t \in \{0,1\}$. If $\widehat y_t \neq
y_t$, we say that algorithm makes an error.

For any predictor $\phi:\{0,1\}^n \to \{0,1\}$,
we define the \emph{regret} with respect to $\phi$ after $T$ rounds as
\[\textstyle \Regret_T(\phi) = \sum_{t=1}^T \indicator[\widehat y_t \neq y_t] - \indicator[\phi(\x_t) \neq y_t].
\]
Our goal is to design an algorithm that is
competitive with any disjunction, i.e. for any disjunction $\phi$ over $n$
variables, the regret is bounded by $\poly(n)\cdot T^{1-\delta}$ for some
$\delta \in (0, 1)$. Recall that a disjunction over $n$ variables is a boolean
function $\phi:\{0,1\}^n \to \{0,1\}$ that on an input $\x = (x(1), x(2),
\dots, x(n))$ outputs
$$
\phi(\x) = \left(\bigvee_{i \in P} x(i) \right) \vee \left(\bigvee_{i \in N} \overline{x(i)} \right)
$$
where $P$ and $N$ are disjoint subsets of $\{1,2,\dots,n\}$. We allow either $P$ or $N$ to be empty, and the empty disjunction is interpreted as the constant $0$ function. For any index $i \in \{1, 2, \ldots, n\}$, we call it a \emph{relevant index} for $\phi$ if $i \in P \cup N$ and \emph{irrelevant index} for $\phi$ otherwise. For any relevant index $i$, we call it \emph{positive} if $i \in P$ and \emph{negative} if $i \in N$.

%!TEX root=shortest-path.tex
\section{General Hardness Result}
\label{section:base-hardness}

In this section, we identify two combinatorial properties of \oscombo \ problems that are computationally hard.

\begin{definition}
	Let $n$ be a positive integer. Consider an instance of \oscombo \ where the ground set $U$ has $d$ elements with $3n + 2 \leq d \leq \poly(n)$. This instance is called a \textbf{hard instance with parameter $n$}, if there exists a subset $\Us \subseteq U$ of size $3n + 2$ and a bijection between $\Us$ and the set (i.e., labeling of elements in $\Us$ by the set)
\[ \bigcup_{i=1}^n\{(i, 0), (i, 1), (i, \star)\} \cup \{\F, \T\},\]
such that the decision set $\D$ satisfies the following properties:
\begin{enumerate}
\item {\bf (Heaviness)} Any action $V \in \D$ has at least $n+1$ elements in $\Us$.
\item {\bf (Richness)} For all $(s_1, \ldots, s_{n+1}) \in \{0, 1, \star\}^n \times \{\F, \T\}$, the action $\{(1, s_1), (2, s_2), \ldots, (n, s_n), s_{n+1}\} \in \Us$ is in $\D$.
\end{enumerate}
% Furthermore, we say that an online sleeping combinatorial optimization problem \textbf{admits} a hard instance with parameter $n$, if the problem includes such an instance.
\end{definition}

% Intuitively speaking, the set of actions $\D$ of a hard instance encodes
% disjunctions over $n$ variables. Note that there $3^n$ disjunctions over $n$
% variables. The richness property ensures that there $2 \times 3^n$ actions, two
% actions for each disjuction.  The heaviness property prevents existence of
% actions with too few elements and makes sure that the $2 \times 3^n$ actions
% are the only relevant ones.

% Intuitively speaking, a hard instance with parameter $n$ \emph{encodes} the
% online agnostic learning of disjunctions over $n$ variables. The richness
% property guarantees that the decision set contains an encoding of every
% possible disjunction. The heaviness property guarantees that, by the pigeonhole
% principle, each action either encodes an invalid disjunction by containing more
% than one of $\{(i, 0), (i,1), (i,\star)\}$ or commits to a prediction by
% containing one of $\{\F, \T\}$.

We now show how to use the above definition of hard \emph{instances} to prove the hardness of an \oscombo (OSCO) \emph{problem} by reducing from the online agnostic learning of disjunction (OALD) problem. At a high level, the reduction works as follows. Given an instance of the OALD problem, we construct a specific instance of the the OSCO and a sequence of losses and availabilities based on the input to the OALD problem. This reduction has the property that for any disjunction, there is a special set of actions of size $n+1$ such that (a) exactly one action is available in any round and (b) the loss of this action exactly equals the loss of the disjunction on the current input example. Furthermore, the action chosen by the OSCO can be converted into a prediction in the OALD problem with only lesser or equal loss. These two facts imply that the regret of the OALD algorithm is at most $n+1$ times the per-action regret of the OSCO algorithm.

 % Thus if the OSCO algorithm has $\poly(n) \cdot T^{1-\delta}$ per-action regret, then the OALD algorithm also has $\poly(n) \cdot T^{1-\delta}$ regret.

% \pagebreak

\begin{algorithm}[t]
\caption{\textsc{Algorithm $\algdisj$ for learning disjunctions}
\label{algorithm:generic-disjunction-learner}}
\begin{algorithmic}[1]
\REQUIRE An algorithm $\algosco$ for the online sleeping combinatorial optimization problem, and the input size $n$ for the disjunction learning problem.
\STATE Construct a hard instance $(U, \D)$ with parameter $n$ of the online sleeping combinatorial optimization problem, and run $\algosco$ on it.
\FOR{$t=1,2,\dots,T$}
\STATE Receive $\x_t \in \{0,1\}^n$.
\STATE Set the set of sleeping elements for $\algosco$ to be $S_t = \{(i, 1-x_t(i))\ |\ i = 1, 2, \ldots, n\}$.
\STATE Obtain an action $V_t \in \D$ by running $\algosco$ such that $V_t \cap S_t = \emptyset$.
\STATE Set $\widehat y_t = \indicator[0 \notin V_t]$.
\STATE Predict $\widehat y_t$, and receive true label $y_t$.
\STATE In algorithm $\algosco$, set the loss of the awake elements $e \in U \setminus S_t$ as follows:
	\[
	\ell_t(e) = \begin{cases}
		\frac{1-y_t}{n+1} & \text{ if } e \neq 0 \\
		y_t - \frac{n(1-y_t)}{n+1} & \text{ if } e = 0.
	\end{cases}
	\]
\ENDFOR
\end{algorithmic}
\end{algorithm}

\begin{theorem} \label{thm:main}
Consider an online sleeping combinatorial optimization problem such that for any positive integer $n$, there is a hard instance with parameter $n$ of the problem. Suppose there is an algorithm $\algosco$ that for any instance of the problem with ground set $U$ of size $d$, runs in time $\poly(T, d)$ and has regret bounded by $\poly(d) \cdot T^{1-\delta}$ for some $\delta \in (0, 1)$. Then, there exists an algorithm $\algdisj$ for online agnostic learning of disjunctions over $n$ variables with running time $\poly(T,n)$ and regret $\poly(n) \cdot T^{1-\delta}$.
\end{theorem}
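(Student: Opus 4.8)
The plan is to verify that Algorithm~\ref{algorithm:generic-disjunction-learner} itself does the job. Fix $n$ and let $(U,\D)$ be the hard instance with parameter $n$ on which $\algosco$ is run; since $d \le \poly(n)$ and (for every problem we consider) this instance is constructible in $\poly(n)$ time, each round of $\algdisj$ costs $\poly(T,d) + \poly(n) = \poly(T,n)$, which gives the claimed running time. Note that by Richness the action consisting of all $\star$ coordinates together with $\F$ is awake in every round, so $\A_t \ne \emptyset$ always and no round is discarded; also the losses assigned are easily checked to lie in $[-1,1]$, as required. The analysis then reduces to two claims: \textbf{(i)} in every round, $\ell_t(V_t) \ge \indicator[\widehat y_t \ne y_t]$, so the algorithm's cumulative loss dominates its mistake count; and \textbf{(ii)} for every disjunction $\phi$ there is a set $\mathcal{F}_\phi \subseteq \D$ of at most $n+1$ actions such that in each round exactly one member of $\mathcal{F}_\phi$ is awake, and that member has loss exactly $\indicator[\phi(\x_t) \ne y_t]$.

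For \textbf{(i)}, recall that every element other than $\F$ has loss $\tfrac{1-y_t}{n+1} \ge 0$ while $\F$ has loss $y_t - \tfrac{n(1-y_t)}{n+1}$, and use Heaviness ($|V_t \cap \Us| \ge n+1$). If $\F \notin V_t$ then $\widehat y_t = 1$ and $\ell_t(V_t) \ge (n+1)\tfrac{1-y_t}{n+1} = 1-y_t = \indicator[\widehat y_t \ne y_t]$; if $\F \in V_t$ then $\widehat y_t = 0$, there are at least $n$ elements of $V_t \cap \Us$ other than $\F$, and $\ell_t(V_t) \ge \bigl(y_t - \tfrac{n(1-y_t)}{n+1}\bigr) + n\cdot\tfrac{1-y_t}{n+1} = y_t = \indicator[\widehat y_t \ne y_t]$.

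For \textbf{(ii)}, let $P$ and $N$ be the positive and negative relevant indices of $\phi$. Since $S_t$ puts exactly one of $(i,0),(i,1)$ to sleep for each $i$ — with $(i,b)$ awake iff $x_t(i)=b$ — while every $(i,\star)$, $\F$, and $\T$ stays awake, I would take (all of these actions lie in $\D$ by Richness): $V_\phi^\bot = \{(i,0):i\in P\}\cup\{(i,1):i\in N\}\cup\{(i,\star): i\notin P\cup N\}\cup\{\F\}$, which is awake iff $\phi(\x_t)=0$; and, for each $j\in P\cup N$, the action $V_\phi^j$ containing $\T$, the witness value at coordinate $j$ (namely $(j,1)$ if $j\in P$, $(j,0)$ if $j\in N$), the non-witness value at every relevant coordinate $i<j$ (namely $(i,0)$ for $i\in P$, $(i,1)$ for $i\in N$), and $\star$ at all remaining coordinates. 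Then $V_\phi^j$ is awake iff $j$ is the smallest relevant index witnessing $\phi(\x_t)=1$, so the family $\mathcal{F}_\phi=\{V_\phi^\bot\}\cup\{V_\phi^j:j\in P\cup N\}$, of size at most $n+1$, has exactly one awake member in every round. Moreover $V_\phi^\bot$ contains $\F$, so by the computation in \textbf{(i)} its loss is $y_t$, which equals $\indicator[\phi(\x_t)\ne y_t]$ on rounds where $\phi(\x_t)=0$; each $V_\phi^j$ contains $\T$, so its loss is $1-y_t$, which equals $\indicator[\phi(\x_t)\ne y_t]$ on rounds where $\phi(\x_t)=1$. Designing $\mathcal{F}_\phi$ — in particular the smallest-witness device that forces exactly one awake action per round — is the step I expect to be the crux; the rest is bookkeeping.

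To finish, apply the per-action regret guarantee of $\algosco$ to each of the at most $n+1$ actions in $\mathcal{F}_\phi$ and sum: $\sum_{V\in\mathcal{F}_\phi}\sum_{t:\,V\in\A_t}\bigl(\ell_t(V_t)-\ell_t(V)\bigr)\le (n+1)\poly(d)T^{1-\delta}=\poly(n)T^{1-\delta}$. Because exactly one $V\in\mathcal{F}_\phi$ is awake each round and its loss equals $\indicator[\phi(\x_t)\ne y_t]$, the left-hand side equals $\sum_{t=1}^T\ell_t(V_t)-\sum_{t=1}^T\indicator[\phi(\x_t)\ne y_t]$, which by \textbf{(i)} is at least $\sum_{t=1}^T\indicator[\widehat y_t\ne y_t]-\sum_{t=1}^T\indicator[\phi(\x_t)\ne y_t]=\Regret_T(\phi)$. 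Hence $\Regret_T(\phi)\le\poly(n)T^{1-\delta}$ for every disjunction $\phi$ over $n$ variables, proving the theorem. Finally, since the reduction maps an i.i.d.\ input stream for the disjunction problem to an i.i.d.\ stream of availabilities and losses for $\algosco$, the same bound holds against a stochastic adversary.
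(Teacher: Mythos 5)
Your proposal is correct and follows essentially the same route as the paper's proof: the same reduction algorithm, the same key inequality $\ell_t(V_t) \geq \indicator[\widehat y_t \neq y_t]$ via heaviness, and the same family of at most $n+1$ actions (your $V_\phi^\bot$ and smallest-witness actions $V_\phi^j$ are exactly the paper's $\D_\phi$), with the identical per-action regret summation at the end. The extra checks you add (losses in $[-1,1]$, nonemptiness of $\A_t$, i.i.d. preservation) are harmless refinements of the same argument.
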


\begin{proof}
$\algdisj$ is given in Algorithm~\ref{algorithm:generic-disjunction-learner}. First, we note that in each round $t$, we have
\begin{equation} \label{eq:loss-algorithm}
\ell_t(V_t)\ \geq\ \indicator[y_t \neq \widehat{y}_t].
\end{equation}
We prove this separately for two different cases; in both cases, the inequality follows from the heaviness property, i.e., the fact that $|V_t| \geq n + 1$.
\begin{enumerate}
\item If $\F \notin V_t$, then the prediction of $\algdisj$ is $\widehat{y}_t = 1$, and thus
\[ \ell_t(V_t) = |V_t| \cdot \frac{1 - y_t}{n+1}\
\geq\ 1 - y_t\ =\ \indicator[y_t \neq \widehat{y}_t]. \]

\item If $\F \in V_t$, then the prediction of $\algdisj$ is $\widehat{y}_t = 0$, and thus
\[\ell_t(V_t)\ = \ (|V_t| - 1) \cdot \frac{1-y_t}{n+1} +  \left(y_t - \frac{n(1-y_t)}{n+1}\right) \ \geq\ y_t\ =\ \indicator[y_t \neq \widehat{y}_t].\]
\end{enumerate}
Note that if $V_t$ satisfies the equality $|V_t| = n+1$, then we have an equality $\ell_t(V_t) =
\indicator[y_t \neq \widehat{y}_t]$; this property will be useful later.

Next, let $\phi$ be an arbitrary disjunction, and let $i_1 < i_2 < \dots < i_m$ be its relevant indices sorted in increasing order.
% The set $\D_\phi$ will contain $k+1$ acions, $V_\phi^1,
% V_\phi^2, \ldots, V_\phi^{k+1}$.
Define $f_\phi:
\{1, 2, \ldots, m\} \rightarrow \{0, 1\}$ as $f_\phi(j) := \indicator[i_j \text{ is a positive index for } \phi]$, and define the set of elements $W_\phi := \{(i, \star)\ |\ i \text{ is an irrelevant index for } \phi\}$.
Finally, let $\D_{\phi} = \{ V_\phi^1, V_\phi^2, \dots, V_\phi^{m+1} \}$ be the set of $m + 1$ actions where for $j = 1, 2, \dots, m$, we define
\[ V_\phi^j\ :=\ \{(i_\ell, 1 - f_\phi(\ell))\ |\ 1 \leq \ell < j\} \cup \{(i_j, f_\phi(j))\} \cup \{(i_\ell, \star)\ |\ j < \ell \leq m\} \cup W_\phi \cup \{\T\}, \]
and
\[V_\phi^{m+1}\ :=\ \{(i_\ell, 1 - f_\phi(\ell))\ |\ 1 \leq \ell \leq m\} \cup W_\phi \cup \{\F\}.\]
The actions in $\D_{\phi}$ are indeed in the decision set $\D$ due to the richness property.
% See Figure~\ref{figure:sets} for an example of this construction.

We claim that $\D_\phi$ contains exactly one awake action in every round and the awake action contains the element $\T$ if and only if $\phi(\x_t) = 1$. First, we prove uniqueness: if $V_{\phi}^{j}$ and $V_{\phi}^{k}$ (where $j < k$) are both awake in the same round, then $(i_j, f_\phi(j)) \in V_{\phi}^{j}$ and $(i_j, 1 - f_{\phi}(j)) \in V_{\phi}^{k}$ are both awake elements, contradicting our choice of $S_t$.
% Second, we prove the existence; if all $V_{\phi}^{j}$ for $j = 1,\ldots, m$ are not awake, then our choie of $S_t$ guarantees that all elements $(i_j, 1 - f_\phi(j))$ for $j = 1,\ldots, m$ are awake, which implies the action $V_{\phi}^{m+1}$ is awake.
% note that by construction, for any two sets $V, W \in \D_\phi$, there is an
% index $j \in \{1, 2, \ldots, k\}$ such that for some $b \in \{0, 1\}$, we have
% $(i_j, b) \in V$ and $(i_j, 1-b) \in W$. Now in any round $t$, given the input
% $\x_t \in \{0, 1\}^n$, if $x_t(i_j) = b$ then $V$ is awake while $W$ is
% sleeping, and vice-versa if $x_t(i_j) = 1 - b$. Thus, no two actions in $\D_{\phi}$ can be awake in the same round.
To prove the rest of the claim, we consider two cases:
\begin{enumerate}
\item If $\phi(\x_t) = 1$, then there is at least one $j \in \{1, 2, \dots,
m\}$ such that $x_t(i_j) = f_\phi(j)$. Let $j'$ be the smallest such
$j$. Then, by construction, the set $V_\phi^{j'}$ is awake at time $t$, and $\T
\in V_\phi^{j'}$, as required.

\item If $\phi(\x_t) = 0$, then for all $j \in \{1, 2, \dots, m\}$ we must
have $x_t(i_j) = 1 - f_\phi(j)$. Then, by construction, the set $V_\phi^{m+1}$
is awake at time $t$, and $\F \in V_\phi^{m+1}$, as required.
\end{enumerate}

Since every action in $\D_\phi$ has exactly $n + 1$ elements, and if $V$ is awake action in $\D_\phi$ at time $t$, we just showed that $\T \in V$ if
and only if $\phi(\x_t) = 1$, exactly the same argument as in the beginning of
this proof implies that
\begin{equation} \label{eq:loss-disjunction}
\ell_t(V)\ =\ \indicator[y_t \neq \phi(\x_t)].
\end{equation}
Furthermore, since exactly one action in $\D_\phi$ is awake every round, we have
\begin{equation} \label{eq:total-loss-disjunction}
\sum_{t=1}^T \indicator[y_t \neq \phi(\x_t)]\ =\ \sum_{V \in \D_\phi}\ \sum_{t:\, V \in \A_t} \ell_t(V).
\end{equation}

Finally, we can bound the regret of algorithm $\algdisj$ (denoted $\Regret_T^\textsf{disj}$) in
terms of the regret of algorithm $\algosco$ (denoted $\Regret_T^\textsf{osco}$) as follows:
\begin{align*}
\Regret_T^\textsf{disj}(\phi)
& = \sum_{t=1}^T \indicator[\widehat y_t \neq y_t] - \indicator[\phi(\x_t) \neq y_t] \le \sum_{V \in \D_\phi}\ \sum_{t:\, V \in \A_t} \ell_t(V_t) - \ell_t(V) \\
&= \sum_{V \in \D_\phi} \Regret_T^\textsf{osco}(V) \le |\D_\phi| \cdot \poly(d) \cdot T^{1 - \delta}= \poly(n) \cdot T^{1 - \delta},
\end{align*}
The first inequality follows by \eqref{eq:loss-algorithm} and \eqref{eq:total-loss-disjunction}, and the last equation since $|\D_{\phi}| \leq n+1$ and $d \leq \poly(n)$.
\end{proof}

% \begin{figure}[h]
% \begin{align*}
% 	V_\phi^1\ &=\ \{(1, 1), (2, \star), (3, \star), (4, \star), (5, \star), \T\} \\
% 	V_\phi^2\ &=\ \{(1, 0), (2, \star), (3, 0), (4, \star), (5, \star), \T\} \\
%  	V_\phi^3\ &=\ \{(1, 0), (2, \star), (3, 1), (4, \star), (5, 1), \T\} \\
% 	V_\phi^4\ &=\ \{(1, 0), (2, \star), (3, 1), (4, \star), (5, 0), \F\}
% \end{align*}
% \caption{\label{figure:sets} The construction of the actions in the set $\D_\phi$ for the disjunction $\phi: \{0, 1\}^5 \rightarrow \{0, 1\}$ given by $\phi(\x) = x(1) \vee \bar{x}(3) \vee x(5)$. The positive index set is $\{1, 5\}$, the negative index set is $\{3\}$, and the irrelevant index set is $\{2, 4\}$.}
% \end{figure}

% \subsection{Non-negative Losses}

% While the possibility of assigning a negative loss to element $\F$ in algorithm
% $\algosco$ may cause some alarm, this is done only to keep the exposition clean. It is
% easy to see that the hardness result given above goes through even if we
% restrict losses of each element to be non-negative, say in the range $[0, 2]$.
% This is achieved by simply adding $1$ to the loss of each awake element $e$ in
% algorithm $\algosco$. The only difference in the analysis is that
% \eqref{eq:loss-algorithm} now becomes
% \[\ell_t(V_t)\ \geq\ \indicator[y_t \neq \widehat{y}_t] + n+1,\]
% and \eqref{eq:loss-disjunction} becomes
% \[ \ell_t(V)\ =\ \indicator[y_t \neq \phi(\x_t)] + n+1.\]
% The additive constant of $n+1$ cancels out when computing the regret, and hence
% the calculations go through just as before.

\subsection{Hardness results for Policy Regret and Ranking Regret}
It is easy to see that our technique for proving hardness easily extends to ranking regret (and therefore, policy regret). The reduction simply uses any algorithm for minimizing ranking regret in Algorithm~\ref{algorithm:generic-disjunction-learner} as $\algosco$. This is because in the proof of Theorem~\ref{thm:main}, the set $\D_\phi$ has the property that exactly one action $V_t \in \D_\phi$ is awake in any round $t$, and $\ell_t(V_t) = \indicator[y_t \neq \widehat{y}_t]$. Thus, if we consider a ranking where the actions in $\D_\phi$ are ranked at the top positions (in arbitrary order), the loss of this ranking exactly equals the number of errors made by the disjunction $\phi$ on the input sequence. The same arguments as in the proof of Theorem~\ref{thm:main} then imply that the regret of $\algdisj$ is bounded by that of $\algosco$, implying the hardness result.

%!TEX root=shortest-path.tex

\section{Hard Instances for Specific Problems}
\label{section:hardness-results}

Now we apply Theorem \ref{thm:main} to prove that many online
sleeping combinatorial optimization problems are as hard as PAC learning DNF expressions by constructing hard instances for them. Note that all these problems admit efficient no-regret algorithms in the
non-sleeping setting.

\subsection{Online Shortest Path Problem}

In the \prob{Online Shortest Path} problem, the learner is given a directed graph $G=
(V,E)$ and designated source and sink vertices $s$ and $t$. The ground set is the set of edges, i.e. $U = E$, and the decision set $\D$ is the set of all paths from $s$ to $t$.  The sleeping version of this problem has been called the \prob{Online Sabotaged Shortest Path} problem by
\cite{Koolen-Warmuth-Adamskiy-2015}, who posed the open question of whether it
admits an efficient no-regret algorithm. For any $n \in \N$, a hard instance is the graph $G^{(n)}$ shown in Figure~\ref{figure:graph}. It has $3n+2$ edges that are labeled by the elements
of ground set $U = \bigcup_{i=1}^n\{(i, 0), (i, 1), (i, \star)\} \cup
\{\F, \T\}$, as required. Now note that any $s$-$t$ path in this graph has length
exactly $n+1$, so $\D$ satisfies the heaviness property. Furthermore, the
richness property is clearly satisfied, since for any $s \in \{0, 1, \star\}^n
\times \{\F, \T\}$, the set of edges $\{ (1, s_1), (2, s_2), \ldots, (n, s_n),
s_{n+1}\}$ is an $s$-$t$ path and therefore in $\D$. 

\begin{figure}[t]
\centering
\includegraphics{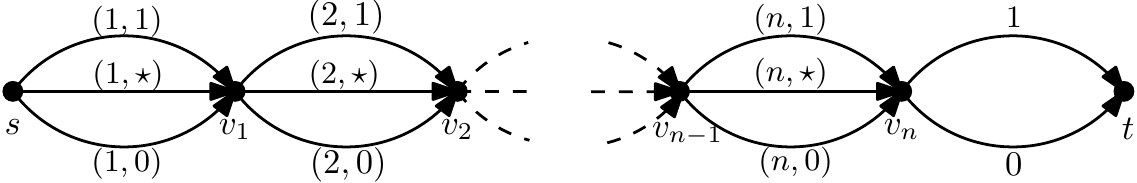}
\caption{\label{figure:graph} Graph $G^{(n)}$.}
\end{figure}

\subsection{Online Minimum Spanning Tree Problem}

In the \prob{Online Minimum Spanning Tree} problem, the learner is given a fixed graph
$G = (V, E)$. The ground set here is the set of edges, i.e. $U = E$, and the
decision set $\D$ is the set of spanning trees in the graph. For any $n \in \N$, a hard instance is the same graph $G^{(n)}$ shown in Figure~\ref{figure:graph}, except that the edges are undirected. Note that the spanning trees in $G^{(n)}$ are exactly the paths from $s$ to $t$. The hardness of this problem immediately follows from the hardness of the \prob{Online Shortest
Paths} problem.

\subsection{Online \texorpdfstring{$k$}{k}-Subsets Problem}

In the \prob{Online $k$-Subsets} problem, the learner is given a fixed ground set of
elements $U$. The decision set $\D$ is the set of subsets of $U$ of size $k$.
For any $n \in \N$, we construct a hard instance with parameter $n$ of the \prob{Online $k$-Subsets} problem with $k = n + 1$ and $d = 3n+2$. The set $\D$ of all subsets of size $k = n+1$ of a ground set $U$ of size $d = 3n+2$ clearly satisfies both the heaviness and richness properties.

\subsection{Online \texorpdfstring{$k$}{k}-Truncated Permutations Problem}
\begin{figure}[t]
\centering
\includegraphics{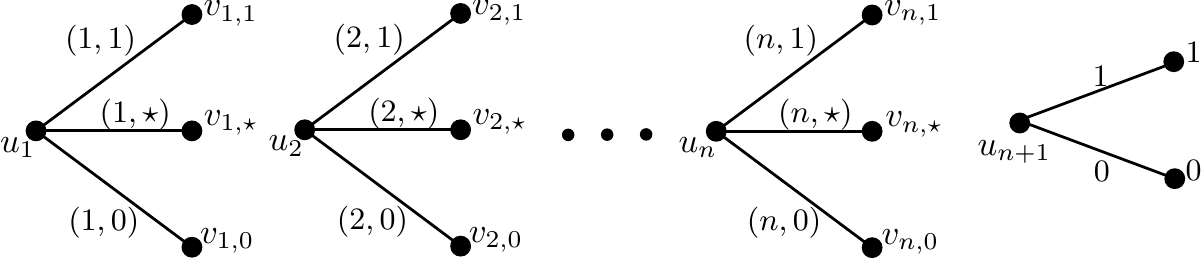}
\caption{\label{figure:permutations} Graph $P^{(n)}$. This is a complete bipartite graph as described in the text, but only the special labeled edges shown for clarity.}
\end{figure}

In the \prob{Online $k$-truncated Permutations} problem (also called the \prob{Online $k$-ranking}
problem), the learner is given a complete bipartite graph with $k$ nodes on one
side and $m \geq k$ nodes on the other, and the ground set $U$ is the set of
all edges; thus $d = km$. The decision set $\D$ is the set of all maximal
matchings, which can be interpreted as truncated permutations of $k$ out of $m$
objects. For any $n \in \N$, we construct a hard instance with parameter $n$ of the \prob{Online
$k$-Truncated Permutations} problem with $k = n + 1$, $m = 3n+2$ and $d = km =
(n+1)(3n+2)$. Let $L = \{u_1, u_2, \ldots, u_{n+1}\}$ be the nodes on the left side of the
bipartite graph, and since $m = 3n+2$, let $R = \{v_{i, 0}, v_{i, 1}, v_{i,
\star}\ |\ i = 1, 2, \ldots, n\} \cup \{v_0, v_1\}$ denote the nodes on the
right side of the graph. The ground set $U$ consists of all $d = km =
(n+1)(3n+2)$ edges joining nodes in $L$ to nodes in $R$. We now specify the
special $3n+2$ elements of the ground set $U$: for $i = 1, 2, \ldots, n$, label
the edges $(u_i, v_{i, 0}), (u_i, v_{i, 1}), (u_i, v_{i, \star})$ by $(i, 0),
(i, 1), (i, \star)$ respectively. Finally, label the edges $(u_{n+1}, v_0),
(u_{n+1}, v_1)$ by $0$ and $1$ respectively. The resulting bipartite graph
$P^{(n)}$ is shown in Figure~\ref{figure:permutations}, where only the special
labeled edges are shown for clarity.

Now note that any maximal matching in this graph has exactly $n+1$ edges, so
the heaviness condition is satisfied. Furthermore, the richness property is
satisfied, since for any $s \in \{0, 1, \star\}^n \times \{\F, \T\}$, the set of
edges $\{ (1, s_1), (2, s_2), \ldots, (n, s_n), s_{n+1}\}$ is a maximal
matching and therefore in $\D$.

\subsection{Online Bipartite Matching Problem}

In the \prob{Online Bipartite Matching} problem, the learner is given a fixed
bipartite graph $G = (V,E)$. The ground set here is the set of edges, i.e. $U =
E$, and the decision set $\D$ is the set of maximal matchings in $G$. For any $n \in \N$, a hard instance with parameter $n$ is the graph $M^{(n)}$ shown in
Figure~\ref{figure:matching}. It has $3n+2$ edges that are labeled by the
elements of ground set $U = \bigcup_{i=1}^n\{(i, 0), (i, 1), (i, \star)\} \cup \{\F, \T\}$, as required. Now note that any maximal matching in this
graph has size exactly $n+1$, so $\D$ satisfies the heaviness property.
Furthermore, the richness property is clearly satisfied, since for any $s \in
\{0, 1, \star\}^n \times \{\F, \T\}$, the set of edges $\{ (1, s_1), (2, s_2),
\ldots, (n, s_n), s_{n+1} \}$ is a maximal matching and therefore in $\D$. 

\begin{figure}[t]
\centering
\includegraphics[width=12cm]{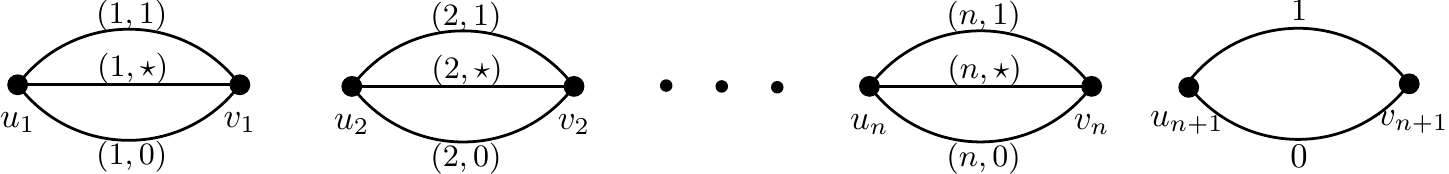}
\caption{\label{figure:matching} Graph $M^{(n)}$ for the \prob{Online Bipartite Matching} problem.}
\end{figure}

\subsection{Online Minimum Cut Problem}

In the \prob{Online Minimum Cut} problem the learner is given a fixed graph
$G = (V,E)$ with a designated pair of vertices $s$ and $t$. The ground set here
is the set of edges, i.e. $U = E$, and the decision set $\D$ is the set of cuts
separating $s$ and $t$: a cut here is a set of edges that when removed from the
graph disconnects $s$ from $t$. For any $n \in \N$, a hard instance is the graph $C^{(n)}$ shown in Figure~\ref{figure:mincut}. It has $3n+2$ edges that are labeled by the
elements of ground set $U = \bigcup_{i=1}^n\{(i, 0), (i, 1), (i, \star)\} \cup \{\F, \T\}$, as required. Now note that any cut in this graph has size
at least $n+1$, so $\D$ satisfies the heaviness property. Furthermore, the
richness property is clearly satisfied, since for any $s \in \{0, 1, \star\}^n
\times \{\F, \T\}$, the set of edges $\{ (1, s_1), (2, s_2), \ldots, (n, s_n),
s_{n+1} \}$ is a cut and therefore in $\D$. 

\begin{figure}[t]
\centering
\includegraphics[width=5cm]{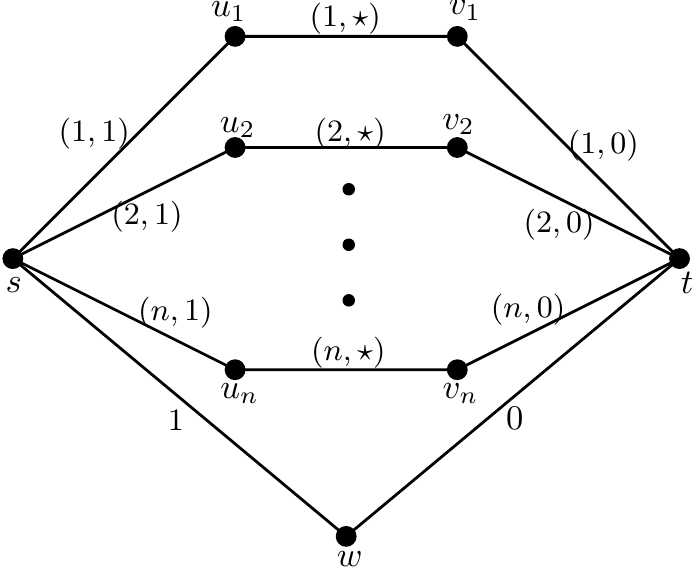}
\caption{\label{figure:mincut} Graph $C^{(n)}$ for the \prob{Online Minimum Cut} problem.}
\end{figure}

% \input{regret-reductions}
%!TEX root=shortest-path.tex
\section{Conclusion}

In this paper we showed that obtaining an efficient no-regret algorithm
for sleeping versions of several natural online combinatorial optimization
problems is as hard as efficiently PAC learning DNF expressions, a
long-standing open problem. Our reduction technique requires only very modest
conditions for hard instances of the problem of interest, and in fact is
considerably more flexible than the specific form presented in this paper. We
believe that almost any natural combinatorial optimization problem that
includes instances with exponentially many solutions will be a hard problem in
its online sleeping variant. Furthermore, our hardness result is via stochastic
i.i.d. availabilities and losses, a rather benign form of adversary. This
suggests that obtaining sublinear per-action regret is perhaps a rather hard
objective, and suggests that to obtain efficient algorithms we might need to either (a) make suitable simplifications of the regret criterion or (b) restrict the adversary's power. 
% While (a) is the subject of future research, in the case of (b), our reduction of per-action regret to ranking regret shows that previous work minimizing policy regret in the setting where either the availabilities or losses are drawn stochastically i.i.d. and independent of the other, yields efficient algorithms for minimizing per-action regret. 

\bibliographystyle{plainnat}
\bibliography{biblio}

\begin{thebibliography}{17}
\providecommand{\natexlab}[1]{#1}
\providecommand{\url}[1]{\texttt{#1}}
\expandafter\ifx\csname urlstyle\endcsname\relax
  \providecommand{\doi}[1]{doi: #1}\else
  \providecommand{\doi}{doi: \begingroup \urlstyle{rm}\Url}\fi

\bibitem[Abbasi-Yadkori et~al.(2013)Abbasi-Yadkori, Bartlett, Kanade, Seldin,
  and Szepesv\'ari]{Abbasi-Yadkori-Bartlett-Kanade-Seldin-Szepesvari-2013}
Yasin Abbasi-Yadkori, Peter~L. Bartlett, Varun Kanade, Yevgeny Seldin, and
  Csaba Szepesv\'ari.
\newblock Online learning in markov decision processes with adversarially
  chosen transition probability distributions.
\newblock In \emph{Advances in Neural Information Processing Systems (NIPS)},
  pages 2508--2516, 2013.

\bibitem[Audibert et~al.(2013)Audibert, S\'ebastien, and
  Lugosi]{Audibert-Bubeck-Lugosi-2013}
Jean-Yves Audibert, Bubeck S\'ebastien, and G\'abor Lugosi.
\newblock Regret in online combinatorial optimization.
\newblock \emph{Mathematics of Operations Research}, 39\penalty0 (1):\penalty0
  31--45, 2013.

\bibitem[Cesa-Bianchi and Lugosi(2006)]{Cesa-Bianchi-Lugosi-2006}
Nicol\`o Cesa-Bianchi and G\'abor Lugosi.
\newblock \emph{Prediction, Learning and Games}.
\newblock Cambridge University Press, New York, NY, 2006.

\bibitem[Freund and Schapire(1997)]{Freund-Schapire-1997}
Yoav Freund and Robert~E. Schapire.
\newblock A decision-theoretic generalization of on-line learning and an
  application to boosting.
\newblock \emph{Journal of Computer and System Sciences}, 55\penalty0
  (1):\penalty0 119--139, 1997.

\bibitem[Freund et~al.(1997)Freund, Schapire, Singer, and
  Manfred]{Freund-Schapire-Singer-Warmuth-1997}
Yoav Freund, Robert~E. Schapire, Yoram Singer, and Warmuth~K. Manfred.
\newblock Using and combining predictors that specialize.
\newblock In \emph{Proceedings of the 29th Annual ACM symposium on Theory of
  Computing}, pages 334--343. ACM, 1997.

\bibitem[Kalai and Vempala(2005)]{Kalai-Vempala-2005}
Adam Kalai and Santosh Vempala.
\newblock Efficient algorithms for online decision problems.
\newblock \emph{Journal of Computer and System Sciences}, 71\penalty0
  (3):\penalty0 291--307, 2005.

\bibitem[Kalai et~al.(2012)Kalai, Kanade, and
  Mansour]{Kalai-Kanade-Mansour-2012}
Adam~Tauman Kalai, Varun Kanade, and Yishay Mansour.
\newblock Reliable agnostic learning.
\newblock \emph{Journal of Computer and System Sciences}, 78\penalty0
  (5):\penalty0 1481--1495, 2012.

\bibitem[Kanade and Steinke(2014)]{Kanade-Steinke-2014}
Varun Kanade and Thomas Steinke.
\newblock Learning hurdles for sleeping experts.
\newblock \emph{ACM Transactions on Computation Theory (TOCT)}, 6\penalty0
  (3):\penalty0 11, 2014.

\bibitem[Kanade et~al.(2009)Kanade, McMahan, and
  Bryan]{Kanade-McMahan-Bryan-2009}
Varun Kanade, H.~Brendan McMahan, and Brent Bryan.
\newblock Sleeping experts and bandits with stochastic action availability and
  adversarial rewards.
\newblock In \emph{Proceedings of the 12th International Conference on
  Artificial Intelligence and Statistics (AISTATS)}, pages 272--279, 2009.

\bibitem[Kearns et~al.(1994)Kearns, Schapire, and
  Sellie]{Kearns-Schapire-Sellie-1994}
Michael~J. Kearns, Robert~E. Schapire, and Linda~M. Sellie.
\newblock Toward efficient agnostic learning.
\newblock \emph{Machine Learning}, 17\penalty0 (2--3):\penalty0 115--141, 1994.

\bibitem[Kleinberg et~al.(2010)Kleinberg, Niculescu-Mizil, and
  Sharma]{Kleinberg-Niculescu-Mizil-Sharma-2010}
Robert Kleinberg, Alexandru Niculescu-Mizil, and Yogeshwer Sharma.
\newblock Regret bounds for sleeping experts and bandits.
\newblock \emph{Machine learning}, 80\penalty0 (2-3):\penalty0 245--272, 2010.

\bibitem[Klivans and Servedio(2001)]{Klivans-Servedio-2001}
Adam~R. Klivans and Rocco Servedio.
\newblock {Learning {DNF} in Time $2^{\tilde {O}(n^{1/3})}$}.
\newblock In \emph{Proceedings of the 33rd Annual ACM Symposium on Theory of
  Computing (STOC)}, pages 258--265. ACM, 2001.

\bibitem[Koolen et~al.(2010)Koolen, Warmuth, and
  Kivinen]{Koolen-Warmuth-Kivinen-2010}
Wouter~M. Koolen, Manfred~K. Warmuth, and Jyrki Kivinen.
\newblock Hedging structured concepts.
\newblock In Adam~Tauman Kalai and Mehryar Mohri, editors, \emph{Proceedings of
  the 23th Conference on Learning Theory (COLT)}, pages 93--105, 2010.

\bibitem[Koolen et~al.(2015)Koolen, Warmuth, and
  Adamskiy]{Koolen-Warmuth-Adamskiy-2015}
Wouter~M. Koolen, Manfred~K. Warmuth, and Dmitry Adamskiy.
\newblock Open problem: Online sabotaged shortest path.
\newblock In \emph{Proceedings of the 28th Conference on Learning Theory
  (COLT)}, 2015.

\bibitem[Littlestone and Warmuth(1994)]{Littlestone-Warmuth-1994}
Nick Littlestone and Manfred~K. Warmuth.
\newblock The weighted majority algorithm.
\newblock \emph{Information and computation}, 108\penalty0 (2):\penalty0
  212--261, 1994.

\bibitem[Neu and Valko(2014)]{Neu-Valko-2014}
Gergely Neu and Michal Valko.
\newblock Online combinatorial optimization with stochastic decision sets and
  adversarial losses.
\newblock In \emph{Advances in Neural Information Processing Systems}, pages
  2780--2788, 2014.

\bibitem[Takimoto and Warmuth(2003)]{Takimoto-Warmuth-2003}
Eiji Takimoto and Manfred~K. Warmuth.
\newblock Path kernels and multiplicative updates.
\newblock \emph{The Journal of Machine Learning Research}, 4:\penalty0
  773--818, 2003.

\end{thebibliography}

\end{document}